\newtheorem{example}{Example}
\newtheorem{definition}{Definition}
\newtheorem{corallary}{Corollary}
\newtheorem{proposition}{Proposition}
\title{When factorization meets argumentation: towards argumentative explanations}
\author{%
Jinfeng Zhong$^1$\and
Elsa Negre$^2$\\
\affiliations
$^1$Paris-Dauphine University, PSL Research
Universities, CEREMADE, 75016 Paris, France\\
$^2$Paris-Dauphine University, PSL Research
Universities, LAMSADE, 75016 Paris, France\\
\emails
jinfeng.zhong@dauphine.eu,
elsa.negre@lamsade.dauphine.fr
}
\begin{document}

\maketitle

\begin{abstract}
 Factorization-based models have gained popularity since the Netflix challenge {(2007)}. Since that, various factorization-based models have been developed and these models have been proven to be efficient in predicting users' ratings towards items. A major concern is that explaining the recommendations generated by such methods is non-trivial because the explicit meaning of the latent factors they learn are not always clear. In response, we propose a novel model that combines factorization-based methods with argumentation frameworks (AFs). The integration of AFs provides clear meaning at each stage of the model, enabling it to produce easily understandable explanations for its recommendations. In this model, for every user-item interaction, an AF is defined in which the features of items are considered as arguments, and the users' ratings towards these features determine the strength and polarity of these arguments. This perspective allows our model to treat feature attribution as a structured argumentation procedure, where each calculation is marked with explicit meaning, enhancing its inherent interpretability. Additionally, our framework seamlessly incorporates side information, such as user contexts, leading to more accurate predictions. We anticipate at least three practical applications for our model: creating explanation templates, providing interactive explanations, and generating contrastive explanations. Through testing on real-world datasets, we have found that our model, along with its variants, not only surpasses existing argumentation-based methods but also competes effectively with current context-free and context-aware methods.
\end{abstract}

\section{Introduction}


Factorization-based models have risen to prominence since the Netflix challenge in 2007 \citep{bennett2007netflix}. Matrix Factorization (MF) \citep{koren2009matrix} simplifies users' preferences into the dot product of user and item embeddings. A multitude of factorization-based models have emerged post-Netflix challenge \citep{rendle2010factorization, he2017neural, xiao2017attentional, he2017neural1}. However, the interpretations of the learned latent factors are often non-trivial, and the incorporation of Multi-Layer Perceptron (MLP) layers into factorization models adds to their opacity. This complexity complicates the process of discerning users' preferences for specific item attributes. Consequently, elucidating the rationale behind the recommendations made by these models is not straightforward. {Finding ``relevant-user'' is the common strategy applied to explain recommendations generated by factorization-based recommender techniques. Indeed, users usually have little knowledge (e.g. the items other users have interacted with before) about ``relevant-user'', which could hinder the trustworthiness of explanations \citep{zhang2018explainable}. At the same time, using other users' interactions may lead to privacy concerns.} A critical research question thus arises: How can we leverage the predictive strength of factorization-based models while ensuring the explainability of their recommendations?

In another line of research, argumentation framework (AF) has been applied to enhance the explainability of {Artificial Intelligence (AI)} \citep{vassiliades2021argumentation}. AF can graphically represent the decision-making process, the predefined properties help to reason how to reach the best decisions \citep{vassiliades2021argumentation}. AF also includes ways to define weighted arguments and dialectical relations between arguments, which contain meanings to decide how strong and acceptable these arguments are \citep{vcyras2021argumentative}. In the domains of RSs, there have been several works \citep {rago2018argumentation, rago2021argumentative, rago2020argumentation} that build explainable RSs through AF. However, the performances are limited and generating explanation require extracting subgraphs.  We will review these works in Section~\ref{sec:Argumentation_framework}.

In this work, we aim to build a factorization-based model that can provide model-intrinsic explanations. To this end, we propose Context-Aware Feature-Attribution Through Argumentation (CA-FATA). The key idea of CA-FATA is that users' ratings towards items depend on items' attributes. For example, in movie recommendation, movies' attributes mainly include actors, genres, and directors. The importance of attribute types varies across users. Some users may have a special preference for certain genres of movies, while some users may prefer movies starred by certain actors. Therefore it is important to capture this difference to accurately model users' preferences. CA-FATA computes the importance of different attribute types. The importance is further applied to compute users' ratings towards item attributes. Users' ratings towards different item attributes are weighted by the importance of attribute types and are aggregated to compute the final ratings towards items. We define an interaction-tailored AF where attributes of items are arguments and users' ratings towards attributes are the strength of these arguments, these arguments may support, attack or neutralize users' ratings towards items depending on users' ratings. This means that the computation of CA-FATA is endowed with explicit meaning, therefore, the prediction of ratings can be traced, which leads to model-intrinsic explanations. Moreover, our framework integrates auxiliary data, like user contexts \footnote[1]{Dey et al. \cite{abowd1999towards}: ``Context is any information that can be used to characterize the situation of an entity.''}. {The motivation is that users' preferences vary across contexts \cite{adomavicius2011context}. We believe that context is also crucial in AFs, as certain arguments that are considered ``good'' in one context may become less accurate in another context. Therefore, {it is important to leverage contexts when applying argumentation \citep{teze2018argumentative,garcia2004defeasible}}.} Further theoretical analyses indicate that the strength function of arguments under the AF for each user-item interaction satisfies two desired properties \emph{Weak balance} \citep{baroni2018many} and \emph{Weak monotonicity} \citep{baroni2019fine}, allowing to derive intuitive explanations in an argumentative way. We foresee three key applications for our framework: toy templates for explanations, offering interactive explanations, and producing contrastive explanations. Empirical evaluations using real-world datasets demonstrate that our model and its variants outperform existing argumentation-based approaches \citep{rago2018argumentation, rago2020argumentation} and are competitive with prevailing context-free and context-aware methods.

The following of this paper is organized as follows: Sections~\ref{sec:related} and~\ref{sec:Preliminary} introduce the key concepts in this paper; Section~\ref{sec:recommender} introduces the recommender model; Section~\ref{sec:argumentative_explanation} presents the argumentation scaffolding and explanation application; Section~\ref{sec:experiments} contains the experiments of empirical evaluation. Lastly, we conclude and propose perspectives.

\section{Related work}\label{sec:related}

{In this section, we will review the state-of-art related to this paper.}
\subsection{Factorization-based methods}

Factorization-based models, also known as latent factor approaches \citep{koren2021advances}, aim to discover underlying features that can be utilized to forecast ratings. Latent factor models comprise pLSA \citep{hofmann2004latent}, Latent Dirichlet Allocation \citep{blei2003latent}, and matrix factorization (MF) \citep{koren2009matrix}. Among these models, MF has gained popularity due to its attractive accuracy and scalability, particularly after the Netflix Prize competition \citep{bennett2007netflix}. In the following, we will compare several existing MF-based methods.

\textbf{Vanilla MF} \citep{koren2009matrix} is the inner product of vectors that represent users and items. Each user $u$ is represented by a vector $\textbf{p}_u \in \mathbb{R}^d$, each item $i$ is represented by a vector $\textbf{q}_i \in \mathbb{R}^d$, and $\hat{r}_{(u,i)}$ is computed by the inner product of $\textbf{p}_u $ and $\textbf{q}_i$: $\hat{r}_{(u,i)} = \textbf{p}_u{\textbf{q}_i}^T$

\textbf{Factorization machine (FM)} \citep{rendle2010factorization} takes into account user-item interactions and other features, such as users' contexts and items' attributes. It captures the second-order interactions of the vectors representing these features, thereby enriching FM's expressiveness. However, interactions involving less relevant features may introduce noise, as all interactions share the same weight \citep{xiao2017attentional}. 



\textbf{Neural Factorization machine (NFM)} \citep{he2017neural} is essentially derived from FM \citep{rendle2010factorization} and extends FM with a \emph{Bilinear Interaction}. The primary idea is that the \emph{Bilinear Interaction} encodes the second-order interactions of vectors representing features. Subsequently, a stack of fully connected layers is incorporated to capture the high-order interactions of vectors. 




\textbf{Attentional Factorization Machines (AFM)} \citep{xiao2017attentional} captures the second-order interactions of vectors representing users, items, and other features, similar to FM. Additionally, it models the attention of each interaction using a MLP. 


Factorization-based models contain a wide range of models and are a widely used approach in RSs, leveraging dimensionality reduction to identify latent features to explore observed user-item interactions. Explaining recommendations generated by factorization models is non-trivial. The latent features they identify do not have a clear, human-understandable meaning, which can make the recommendations they produce difficult to explain \citep{zhang2018explainable}. Often, researchers turn to post-hoc explanation techniques to explain recommendations \citep{zhong2022shap}. To explain directly the recommendations, we combine factorization with argumentation, in the next sub-section, we introduce the concepts related to argumentation.

\subsection{Argumentation frameworks}\label{sec:Argumentation_framework}

In recent years, argumentation-based methods have gained much attention in eXplainabll Artificial Intelligence (XAI) \citep{vassiliades2021argumentation, vcyras2021argumentative}. In fact, argumentation has been used to explain the decision-making process at the early stage of \emph{Argumentation Theory} \citep{pavese2019semantics}. This is because argumentation is intuitively understandable and explainable since the relations (e.g. attack, support.) in an AF can be easily understood. With the help of AFs, the decision-making process of AI systems can be mapped into an argumentation procedure, this helps to explain step by step how a decision has been achieved \citep{vassiliades2021argumentation}. AFs have been applied to build explainable systems in many domains, such as laws \citep{bench2020before}, where the authors discuss the application of AF in law; medical treatment \citep{sassoon2019explainable}, where the authors describe a wellness consultation system based on an AF. 

AFs have also been widely applied in RSs. Due to the natural explainability of argumentation, recommendations generated by argumentation-based methods are naturally explainable. \cite{toniolo2020dialogue} apply argumentation to build a decision-making system that can recommend medical treatment to patients suffering from multiple chronic health problems. Recommendations are justified through \emph{Satisfiability Modulo
Theories} \citep{barrett2018satisfiability} and patients receive explanations through argumentation dialogues. \cite{teze2015improving} apply Defeasible Logic Programming (DeLP) \citep{garcia2004defeasible} to generate recommendations adapted to users' contexts. \cite{briguez2014argument} propose a movie RS through a set of predefined recommendation rules. \citep{rago2018argumentation, rago2020argumentation} present an explainable recommendation framework called \emph{Aspect-Item framework (A-I)}. The framework computes users' ratings toward items in an argumentative way. More specifically, the framework can predict users' ratings toward the attributes of items (e.g. the actors of a movie; the genres of a movie, etc.), these ratings are further aggregated to compute users' ratings toward items. Our work is inspired by this series of works and is based on matrix factorization: we combine factorization and argumentation.

\section{Preliminary}\label{sec:Preliminary}

{In this section, we will lay out the key definitions that will be used in the following of this paper and also the formulation of problem. To standardize the terminology used in the rest of this paper, we present the key notions in Table~\ref{tab:notions}. Note that the bold font is used to represent the vectors that denote $cd$, $cs$, $cf$, $u$, $t$, and $at$.}

\subsection{Weak balance and weak monotonicity}
The concept of \emph{weak balance}, as defined by \cite{rago2018argumentation}, is a generalization of the notion of ``strict balance'' proposed by \cite{baroni2018many}. Similarly, the idea of \emph{weak monotonicity}, defined by \cite{baroni2019fine}, is a generalization of the concept of ``strict monotonicity'' as defined by \cite{rago2021argumentative}. The two properties allow for deriving intuitive explanations in an argumentative way. Essentially, the concept of \emph{Weak balance} concerns the impact of an argument on its affectees when the argument is the sole factor affecting them, while the idea of \emph{Weak monotonicity} focuses on how the potency of an argument changes when one of its affecters is silenced relative to the neutral point.


\textbf{Weak balance:}  Formally, \emph{weak balance} can be defined as follows:
\begin{definition} 
Given a Tripolar Argumentation Framework (TAF) $<\mathcal{A}, \mathcal{R}^{-}, \mathcal{R}^{+}, \mathcal{R}^{0}>$, $\sigma(at)$ satisfies the property of \emph{weak balance} if, for any $a,b \in \mathcal{A}:$
\begin{center}
$\bullet$ if  $\mathcal{R}^+(rec^i) = \{at\}$, $\mathcal{R}^-(rec^i) = \emptyset$ and $\mathcal{R}^0(rec^i) = \emptyset$ then $\sigma(rec^i) > 0$;
\\
$\bullet$ if  $\mathcal{R}^-(rec^i) = \{at\}$, $\mathcal{R}^+(rec^i) = \emptyset$ and $\mathcal{R}^0(rec^i) = \emptyset$ then $\sigma(rec^i) < 0$;
\\
$\bullet$ if  $\mathcal{R}^0(rec^i) = \{at\}$, $\mathcal{R}^+(rec^i) = \emptyset$ and $\mathcal{R}^-(rec^i) = \emptyset$ then $\sigma(rec^i) = 0$.
\end{center}
\end{definition}


\textbf{Weak monotonicity :}  Formally, \emph{weak monotonicity} is defined as follows:

\begin{definition} 
Given two TAFs $<\mathcal{A}, \mathcal{R}^{-}, \mathcal{R}^{+}, \mathcal{R}^{0}>$ and $<\mathcal{A}^{\prime}, \mathcal{R}^{{-}^{\prime}}, \mathcal{R}^{{+}^{\prime}}, \mathcal{R}^{{0}^{\prime}}>$, $(x,y) \in (\mathcal{R}^{-} \cup \mathcal{R}^{+} \cup \mathcal{R}^{0}) \cap (\mathcal{R}^{{-}^{\prime}}\cup\mathcal{R}^{{+}^{\prime}}\cup\mathcal{R}^{{0}^{\prime}})$. $\sigma$ satisfies \emph{weak monotonicity} at $(x,y)$ if, as long as $\sigma(x) = 0$ in $<\mathcal{A}^{\prime}, \mathcal{R}^{{-}^{\prime}}, \mathcal{R}^{{+}^{\prime}}, \mathcal{R}^{{0}^{\prime}}>$ and $\forall z \in [(\mathcal{R}^{-}(y) \cup \mathcal{R}^{+}(y)  \cup \mathcal{R}^{0})(y)  \cap (\mathcal{R}^{{-}^{\prime}}(y) \cup\mathcal{R}^{{+}^{\prime}}(y) \cup\mathcal{R}^{{0}^{\prime}})(y)]\backslash\{x\}$ and for $\sigma(z) = s$ in  $<\mathcal{A}, \mathcal{R}^{-}, \mathcal{R}^{+}, \mathcal{R}^{0}>$ and $\sigma(z) = s^{\prime}$ in $<\mathcal{A}^{\prime}, \mathcal{R}^{{-}^{\prime}}, \mathcal{R}^{{+}^{\prime}}, \mathcal{R}^{{0}^{\prime}}>$, $s = s^{\prime}$. Then the following holds for $\sigma(y) = v$ in  $<\mathcal{A}, \mathcal{R}^{-}, \mathcal{R}^{+}, \mathcal{R}^{0}>$ and $\sigma(y) = v^{\prime}$ in $<\mathcal{A}^{\prime}, \mathcal{R}^{{-}^{\prime}}, \mathcal{R}^{{+}^{\prime}}, \mathcal{R}^{{0}^{\prime}}>$
\begin{center}
$\bullet$ if $x \in \mathcal{R}^{-}(y) \cap \mathcal{R}^{{-}^{\prime}}(y)$, then $v^{\prime} > v $;
\\
$\bullet$ if $x \in \mathcal{R}^{+}(y) \cap \mathcal{R}^{{+}^{\prime}}(y)$, then $v^{\prime} < v $;
\\
$\bullet$ if $x \in \mathcal{R}^{0}(y) \cap \mathcal{R}^{{0}^{\prime}}(y)$, then $v^{\prime} = v $.
\end{center}
\end{definition}

\begin{table}[t]
\centering
\scriptsize
\caption{The key notions in this paper}
\begin{tabular}{@{}ll@{}}
\toprule
Notion                    & Description                                  \\ \midrule
$cf$ & A contextual factor                  \\
$C$&The contextual factors to characterize the situation of users\\
$cd$&A contextual condition\\
$\pi_u^{cf}$&The importance of contextual factor $cf$ to user $u$\\
$cs = (cd_1, cd_2, cd_3, \dots)$& A contextual situation\\
$\pi_u^t$&  The importance of type $t$ to user $u$  \\
$t_i$ & The feature types of item $i$
\\
$at_i$&The features of item $i$\\
\multirow{2}{*}{$at_i^{t}$} & The features of item $i$ of type $t$ (e.g. A movie may \\
&have several genres)\\
 $\mathcal{P}_u^{at}$ & The predicted rating of user $u$ towards feature $at$ \\
 $\mathcal{R^+}, \mathcal{R}^-, \mathcal{R}^0$ & Support, attack and neutral relations among arguments                  \\
$\mathcal{R}^-$(a,b) & Argument $a$ attacks argument $b$, the same for ``+'' and ``0''  \\
$\mathcal{A}$             & A set of arguments                           \\
\multirow{2}{*}{$rec^i$} &An argument stating \\
&``the item can be recommended to the target user''\\
$\mathcal{R}^+(rec^i) = \{at|(at,i) \in \mathcal{R}^+\}$&The arguments (features) that support $rec^i$\\
$\mathcal{R}^-(rec^i) = \{at|(at,i) \in \mathcal{R}^+\}$&The arguments (features) that attack $rec^i$\\
$\mathcal{R}^0(rec^i) = \{at|(at,i) \in \mathcal{R}^+\}$&The arguments (features) that neutralize $rec^i$\\
$<\mathcal{A}, \mathcal{R}^{-}, \mathcal{R}^{+}, \mathcal{R}^{0}>$& A tripolar argumentation framework\\
$\sigma(a)$               & The strength of argument $a$                 \\
 \bottomrule
\end{tabular}
\label{tab:notions}
\end{table}

\subsection{Problem formulation}\label{sec7.2.1}
 Considering the following recommendation scenario: for a target user $u$ under a contextual situation $cs$. The features of an item can have positive, negative, or neutral impacts on the prediction. In this regard, the features can be viewed as arguments, along with another argument stating ``the item can be recommended to the target user'', $rec^i$. Such a scenario can be seamlessly integrated into a TAF \citep{gabbay2016logical}. For a user-item interaction $(u,i)$ under $cs$, the TAF tailored to this interaction is a quadruple: $<\mathcal{A}, \mathcal{R}^{-}, \mathcal{R}^{+}, \mathcal{R}^{0}>$, where $\mathcal{A}$ contains $at_i$ (the features of item $i$) and $rec^i$. To illustrate this idea, consider the following example where $\mathcal{R}^{+}(at,rec^i)$ denotes that the feature $at$ has a positive effect on the recommendation of item $i$. Hence, the objectives are as follows: (i) to predict the rating assigned by a target user $u$ to a particular item $i$ in a given contextual situation $cs$; (ii) to determine the contribution of each item feature to a prediction under this contextual situation $cs$; (iii) to assess the polarity of each argument (feature) in the TAF; (iv) to design the strength function of arguments that comply with the conditions of \emph{weak balance} and \emph{weak monotonicity} as defined above.
\section{Recommender system}\label{sec:recommender}


The core ideas of Context-Aware Feature-Attribution Through Argumentation (CA-FATA) are as follows: (i) Users' ratings towards items depend on the features of items, which is also a fundamental concept of content-based RSs; (ii) The importance of feature types varies across users. For instance, when choosing a movie, some users may have specific preferences for certain actors while others may prefer movies of particular directors. In the former case, users pay more attention to the feature type \textbf{\emph{stars\_in}}, while in the latter case, they accord more importance to the feature type \textbf{\emph{directs}}; (iii) CA-FATA further extends this idea by noting that users' preferences also vary across contexts \citep{adomavicius2011context}. In the context of movie recommendations, some users are more likely to be influenced by \textbf{\emph{companion}} (e.g., with a lover or children) while others may be more likely to be influenced by \textbf{\emph{location}} (e.g., home or public place); (iv) We map item-features graphs into TAFs, where features of items are regarded as arguments that may attack, support, or neutralize the recommendation of the item; (v) The dialectical relations are learned in a data-driven way and supervised by users' past interactions with items (aka. ratings towards items).

\begin{figure}[t]
\centerline{\includegraphics[scale=0.20]{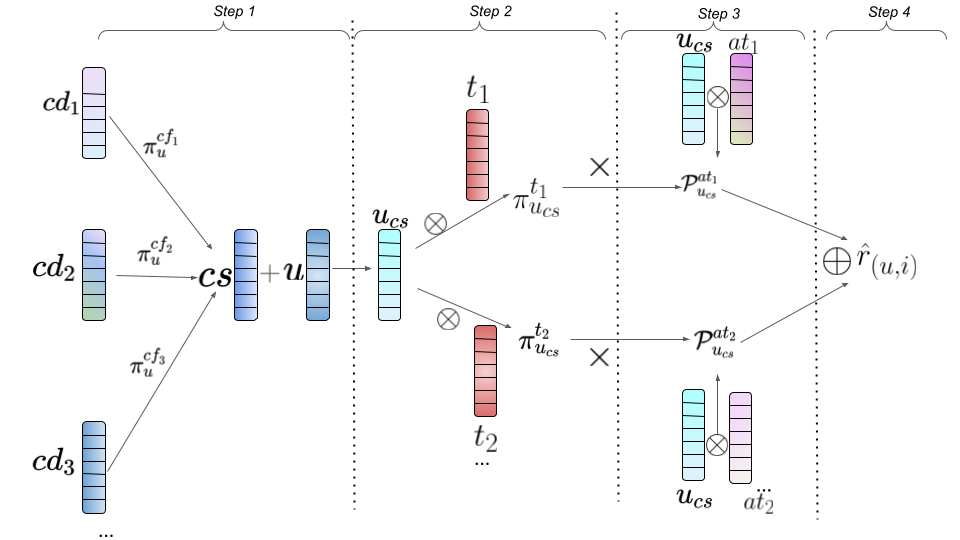}}
\caption[Illustration of the steps of CA-FATA.]{Illustration of the steps of CA-FATA. $\textbf{u}$, $\textbf{t}$, $\textbf{cf}$, \textbf{at} are vectors that represent user, feature type, contextual factor, and feature respectively; $\pi_u^{t}$ is the importance of feature type $t$ to user $u$; $\pi_u^{cf}$ is the importance of contextual factor $cf$ to user $u$; $\otimes$ denotes the inner product operation; $\times$ denote multiplication; $\oplus$ denotes addition.}
\label{fig:framework_fata}
\end{figure}

The structure of the CA-FATA comprises four successive steps. (0) First, we calculate the representation of target users within the given contextual situation. This initial step is crucial to ensure the tailoring of users' preferences and argument dialectical relations to the contextual circumstances. This step's objective is to procure a distinct user representation for each target contextual situation, ensuring that feature importance for each user is also adapted to the specific context. Additionally, it assures the polarity of the arguments (item features) aligns with the target contextual situation. (1) Next, we discern the significance of feature types relative to the aforementioned contextual situation. (2) The third step involves computing the users' ratings of item features within the contextual situation under consideration. This data then serves to delineate the dialectical relations. (3) The final step aggregates the ratings derived in the preceding step to generate users' overall item ratings.

\textbf{Computing user representation (Step 1):} As illustrated in Figure~\ref{fig:framework_fata}, the ultimate representation of a user is determined by the user's contextual situation. In this step, our goal is to compute the representation of users that is adapted to the target contextual situation. To achieve this, we begin by computing the score of the contextual factor $cf$ for a user $u$: $\beta_u^{cf} = g(\textbf{u},\textbf{cf})$, where $g$ is the inner product\footnote[2]{Note that other functions (e.g. MLP) may be adopted, but for simplicity, we adopt the inner product.}. After calculating the score $\beta_u^{cf}$ of all contextual factors to this user, {we normalize the score: $\pi_u^{cf} = \frac{exp(LeakyReLU(\beta_u^{cf}))}{\sum_{cf \in C}exp(LeakyReLU(\beta_u^{cf}))}$, as described by \cite{velivckovic2017graph}} to obtain the importance of each contextual factor. The importance computed here is similar to the relevance weight introduced by \cite{budan2020proximity}. However, unlike in these two works, where the relevance weight of the context is set empirically, in our work, the importance of the context is learned in a data-driven way. Intuitively, $\pi_u^{cf}$ characterizes the extent to which user $u$ wants to take contextual factor $cf$ into account.

In sequence, we compute the representation of the contextual situation $cs$ by summing up all the vectors representing contextual conditions multiplied by $\pi_u^{cf}$: $\textbf{cs} = \sum\limits_{cd \in cs} \pi_u^{cf}\textbf{cd}$, where $\textbf{cs}$ is the vector that denotes contextual situation $cs$. The next step is to aggregate the representation of contextual situation $cs$ with the representation of user $u$ to obtain a specific representation of user $u$ under the contextual situation $cs$. To avoid having an excessive number of parameters\footnote[3]{{Note that other aggregation methods such as concatenation are also possible but more parameters are induced. We leave this exploration for future work.}}, we sum $\textbf{u}$ and $\textbf{cs}$: $\bm{u_{cs}} =\textbf{u} + \textbf{cs}$. By aggregating information from a contextual situation $cs$ and a user $u$, each user $u$ gets a specific representation $\bm{u_{cs}}$ under a contextual situation $cs$.

\textbf{Computing feature type importance (Step 2):}

{The feature types in this paper are similar to the relations in knowledge graphs}, which are directed graphs consisting of \emph{entity-relation-entity} triplets \citep{hogan2021knowledge}. For instance, the triplet $(Harry Potter, hasDirector, Mike Newell)$ indicates that the movie \emph{Harry Potter} is directed by \emph{Mike Newell}. Here, $hasDirector$ is a relation in the knowledge graph that pertains to movies, and in this paper, it corresponds to the feature type $director$. We quantify the score between a feature type $t$ and a user $u$ as proposed by \cite{wang2019knowledge}: $\beta_{u_{cs}}^{t} = g(\bm{u_{cs}},\textbf{t})$, where $\bm{u_{cs}}$ is computed in the previous step, $\textbf{t}$ denotes the vector that represents feature type $t$. After calculating the score $\beta_{u_{cs}}^{t}$ of all feature importance to this user, {we normalize the score: $\pi_{u_{cs}}^t = \frac{exp(LeakyReLU(\beta_{u_{cs}}^{t}))}{\sum_{t \in t_i}exp(LeakyReLU(\beta_{u_{cs}}^{t}))}$

\textbf{Computing users' ratings towards features (Step 3):} 

To compute users' ratings towards features, we adopt the inner product again: $\mathcal{P}_{u_{cs}}^{at} = g(\bm{u_{cs}}, \textbf{at})$

According to Step 1 in Section~\ref{sec:recommender}, the representation of a user under one context differs from that under another context. As a result, the representation of user $u_{cs}$ is specific to each context, and the importance of feature type and user's rating towards features is also context-aware.

\textbf{Aggregating ratings towards features (Step 4):} After calculating the importance of each feature type and users' ratings towards each feature, the next step is to compute the rating of user $u$ towards item $i$ by computing the contribution of each feature type $t$ using the following equation:

\begin{equation} \label{equa:contribution_fata}
contr_t = \frac{\sum_{at \in at_i^t }\mathcal{P}_{u_{cs}}^{at}}{|at_i^t|}
\end{equation}
where $at_i^t$ denotes the features belong to type $t$ of item $i$. Finally, $u$'s rating towards $i$ under $cs$ is: 

\begin{equation} \label{equa:final_fata}
\hat{r}_{(u,i)} = \sum_{t\in t_i}\pi_{u_{cs}}^t * contr_t
\end{equation} 
where $t_i$ denotes all the feature types of item $i$. We would like to highlight that the actual user $u$'s rating for item $i$ is a real number ranging between $-1$ and $1$, in accordance with the definitions provided in earlier works \citep{rago2018argumentation,rago2021argumentative}. Furthermore, the striking resemblance between Equation~\ref{equa:final_fata} and the generalized additive model \citep{lou2012intelligible} is worth noting, signifying that our model belongs to the extended family of generalized additive models. This correlation facilitates straightforward identification of each feature's contribution, thereby achieving the first two objectives: \textbf{{calculating users' ratings for items and determining each feature's contribution defined in Section~\ref{sec7.2.1}}}.

\section{Explaining recommendations}\label{sec:argumentative_explanation}

In this section, we depict the details for explaining recommendations generated by CA-FATA.

\subsection{Argumentation scaffolding}\label{sec7.2.3}

In the following subsection, we aim to demonstrate how the computational steps outlined in Section~\ref{sec:recommender} can be accurately mapped onto an argumentation procedure. Our process begins by establishing a TAF that corresponds to each user-item interaction. This is accomplished by adhering to the steps outlined in CA-FATA. Subsequently, we delineate a strength function for arguments that aligns with both the principles of \emph{weak balance} \citep{rago2018argumentation} and \emph{weak monotonicity} \citep{rago2021argumentative}. This approach paves the way for determining the contribution of individual features.

\subsubsection{Argumentation setting}

CA-FATA first computes users' ratings towards features and then aggregates these ratings to predict ratings towards items. As a result, the features of items can be considered as arguments, and users' ratings towards features can be seen as the strength of these arguments. If CA-FATA predicts that a user's rating towards a feature of an item is high (under a target contextual situation $cs$), then this feature can be viewed as an argument that supports the recommendation of the item under $cs$. Conversely, if the predicted rating is low, the feature can be considered as an argument against the recommendation of the item under $cs$. In cases where features do not attack or support, a neutralizing relation is added, represented by $\mathcal{P}_{u_{cs}}^{at} = 0$. Therefore, we set $\sigma(at) = \mathcal{P}_{u_{cs}}^{at}$ and adopt TAF that contains support, attack, and neutralizing. Moreover, we set $\sigma(rec^i) =\hat{r}{(u,i)}$. When $\hat{r}{(u,i)} > 0$, the argument $rec^i$ is stronger if $\hat{r}_{(u,i)}$ is larger\footnote[4]{Note that if $\hat{r}_{(u,i)} < 0$ then the smaller $\hat{r}_{(u,i)}$ is, the stronger argument ``not recommending item $i$'' is.}. 

Recall that the true rating $r_{(u,i)}$ is a real number between -1 and 1, then the co-domain of $\mathcal{P}_{u_{cs}}^{at}$ is also expected to be between -1 and 1. Therefore, when $\mathcal{P}_{u_{cs}}^{at} > 0$, then $\sigma(at) > 0$, indicating that $at$ is an argument that supports $rec^i$\footnote[5]{Semantically, users $u$ prefers feature $at$.}; when $\mathcal{P}_{u_{cs}}^{at} = 0$, then $\sigma(at) = 0$, indicating that $at$ is an argument that neutralizes $rec^i$; when $\mathcal{P}_{u_{cs}}^{at} < 0$, then $\sigma(at) < 0$, indicating that $at$ is an argument that attacks $rec^i$. Therefore, the TAF corresponds to a user-item interaction $(u,i)$ under $cs$ can be defined as follows: 

\begin{definition}\label{def:taf_fata}
The TAF corresponding to $(u,i)$ under $cs$ is a 4-tuple: $<\mathcal{A}, \mathcal{R}^{-}, \mathcal{R}^{+}, \mathcal{R}^{0}>$ such that:
\begin{center}
$\bullet$ $\mathcal{R}^{-} = \{ (at,rec^i)|{\mathcal{P}_{u_{cs}}^{at}} < 0\}$;
\\
$\bullet$ $\mathcal{R}^{+} = \{(at,rec^i)|{\mathcal{P}_{u_{cs}}^{at}} > 0\}$;
\\
$\bullet$ $\mathcal{R}^{0} = \{(at,rec^i)|{\mathcal{P}_{u_{cs}}^{at}} = 0\}$.
\end{center}
\end{definition} 

According to Definition~\ref{def:taf_fata}, ${\mathcal{P}_{u_{cs}}^{at}}$ determines the polarity of arguments: if ${\mathcal{P}_{u_{cs}}^{at}}$ is positive then the argument (feature) supports the recommendation of item $i$ to user $u$; if ${\mathcal{P}_{u_{cs}}^{at}}$ is negative then the argument (feature) attacks the recommendation of item $i$ to user $u$; if ${\mathcal{P}_{u_{cs}}^{at}}$ is 0 then the argument neutralizes the recommendation. Therefore, the third goal: \textbf{{determining the polarity of each argument (feature)} under the target contextual situation defined in Section~\ref{sec7.2.1}}, is fulfilled.

\subsubsection{Proofs}

We will now show that by setting $\sigma(at) = \mathcal{P}_{u_{cs}}^{at}$ and $\sigma(rec^i) =\hat{r}{(u,i)}$,  TAF corresponding to $(u,i)$ under $cs$ satisfies \emph{weak balance} and \emph{weak monotonicity}. Recall that \emph{weak balance} states that attacks (or supports) can be characterized as links between affecters and affectees in a way such that if one affecter is isolated as the only argument that affects the affectee, then the former reduces (increases, resp.) the latter's predicted rating with respect to the neutral point.

\begin{proposition}\label{proposition:balance_fata}
Given the TAF corresponding to $(u,i)$ under $cs$, $\sigma(at) = \mathcal{P}_{u_{cs}}^{at}$ and $\sigma(i)$ satisfy \emph{weak balance}.
\end{proposition}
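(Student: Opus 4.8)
The plan is to verify each of the three clauses of \emph{weak balance} directly from Equations~\ref{equa:contribution_fata} and~\ref{equa:final_fata}, using Definition~\ref{def:taf_fata} to translate between the sign of $\mathcal{P}_{u_{cs}}^{at}$ and the polarity of the relation that $at$ bears to $rec^i$. The starting observation is that in all three hypotheses exactly one argument affects $rec^i$: whichever of $\mathcal{R}^{+}(rec^i)$, $\mathcal{R}^{-}(rec^i)$, $\mathcal{R}^{0}(rec^i)$ is non-empty equals the singleton $\{at\}$, while the other two are empty. Since the arguments affecting $rec^i$ are exactly the features of item $i$, this forces $at_i = \{at\}$, so $i$ carries a single feature, of a single type $t$, with $at_i^t = \{at\}$ (hence $|at_i^t| = 1$) and $t_i = \{t\}$.

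I would then substitute into the aggregation equations. With $at_i^t = \{at\}$, Equation~\ref{equa:contribution_fata} collapses to $contr_t = \mathcal{P}_{u_{cs}}^{at}$. Because $t$ is the only feature type, the softmax normalization of Step 2 gives $\pi_{u_{cs}}^t = 1$ (and more generally every $\pi_{u_{cs}}^t$ is strictly positive, being a ratio of exponentials). Hence Equation~\ref{equa:final_fata} reduces to $\sigma(rec^i) = \hat{r}_{(u,i)} = \pi_{u_{cs}}^t \cdot contr_t = \mathcal{P}_{u_{cs}}^{at} = \sigma(at)$.

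It then remains to read off the sign. For the first clause, $at \in \mathcal{R}^{+}(rec^i)$ means $\mathcal{P}_{u_{cs}}^{at} > 0$ by Definition~\ref{def:taf_fata}, so $\sigma(rec^i) > 0$; the second clause follows identically, with $\mathcal{P}_{u_{cs}}^{at} < 0$ yielding $\sigma(rec^i) < 0$; and the third with $\mathcal{P}_{u_{cs}}^{at} = 0$ yielding $\sigma(rec^i) = 0$. This establishes all three conditions of \emph{weak balance}.

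There is no deep obstacle here: the argument is a direct substitution into the two aggregation equations. The only points requiring care are the reduction step, namely recognizing that a single affecting argument forces the item to have a single feature and therefore a single feature type, and the observation that the softmax weight of a lone type equals $1$, so that it can neither flip nor annihilate the sign of $\mathcal{P}_{u_{cs}}^{at}$. The strict positivity of the softmax outputs, which holds because the exponential is strictly positive, is what guarantees the weight is never zero even in the general multi-type setting, and it is this positivity that will be reused when establishing \emph{weak monotonicity}.
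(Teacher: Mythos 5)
Your proof is correct and follows essentially the same route as the paper's: a three-case sign analysis obtained by substituting the lone affecting argument into Equations~\ref{equa:contribution_fata} and~\ref{equa:final_fata}. The only difference is that you make explicit the steps the paper leaves implicit --- that a single affecter forces $at_i = \{at\}$ with one feature type, and that the softmax weight $\pi_{u_{cs}}^t$ is then $1$ (and in general strictly positive), so the sign of $\mathcal{P}_{u_{cs}}^{at}$ passes through to $\hat{r}_{(u,i)}$ unchanged --- which is a welcome tightening rather than a different argument.
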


\begin{proof}
By inspecting Equations~\ref{equa:contribution_fata} and ~\ref{equa:final_fata}, it can be observed that since users' ratings are transformed into $[-1,1]$, the co-domain of $\sigma$ is also $[-1,1]$. Case (i): if $\mathcal{R}^+(rec^i) = \{at\}$, $\mathcal{R}^-(rec^i) = \emptyset$ and $\mathcal{R}^0(rec^i) = \emptyset$ then $\sigma(at) > 0$, therefore, $\mathcal{P}_{u_{cs}}^{at} > 0$, according to Equations~\ref{equa:contribution_fata} and ~\ref{equa:final_fata}, $\hat{r}{(u,i)} > 0 $, indicating that $\sigma(rec^i) > 0$. Case (ii): if $\mathcal{R}^-(rec^i) = \{at\}$, $\mathcal{R}^+(rec^i) = \emptyset$ and $\mathcal{R}^0(rec^i) = \emptyset$ then $\sigma(at) < 0$, therefore, $\mathcal{P}_{u_{cs}}^{at} < 0$, according to Equations~\ref{equa:contribution_fata} and ~\ref{equa:final_fata}, $\hat{r}{(u,i)} < 0 $, indicating that $\sigma(rec^i) < 0$. Case (iii): if $\mathcal{R}^0(rec^i) = \{at\}$, $\mathcal{R}^+(rec^i) = \emptyset$ and $\mathcal{R}^-(rec^i) = \emptyset$ then $\sigma(at) = 0$, therefore, $\mathcal{P}_{u_{cs}}^{at} = 0$, according to Equations~\ref{equa:contribution_fata} and ~\ref{equa:final_fata}, $\hat{r}{(u,i)} = 0 $, indicating that $\sigma(rec^i) = 0$. 
\end{proof}

Similar to  \emph{weak balance}, \emph{weak monotonicity} characterizes attacks, supports, and neutralizes as links between arguments such that if the strength of one affecter is muted then the strength of its affectees increases, decreases, and remains unchanged, respectively. Therefore, \emph{weak monotonicity} highlights the positive/negative/neutral effect between arguments. In this way, \emph{weak monotonicity} reveals the positive, negative, or neutral effect of an argument.

\begin{proposition}\label{proposition:monotonicity_fata}
Given the TAF corresponding to $(u,i)$ under $cs$, $\sigma(at) = \mathcal{P}_{u_{cs}}^{at}$ and $\sigma(i)$ satisfy \emph{weak monotonicity}.
\end{proposition}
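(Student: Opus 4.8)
The plan is to compute the difference $v' - v = \sigma(rec^i)$ in the primed framework minus $\sigma(rec^i)$ in the unprimed framework directly from Equations~\ref{equa:contribution_fata} and~\ref{equa:final_fata}, and then read off its sign. First I would fix the affecter to be muted, say $x = at^{*} \in at_i$ of type $t^{*}$, and record the structural fact that the primed framework differs from the unprimed one only in replacing $\sigma(x) = \mathcal{P}_{u_{cs}}^{at^{*}}$ by the neutral value $0$, leaving every other feature's strength and every importance weight untouched.

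The key observation I would establish is that the importance weights $\pi_{u_{cs}}^{t}$ depend only on $\bm{u_{cs}}$ and the type embeddings $\textbf{t}$ (through $\beta_{u_{cs}}^{t} = g(\bm{u_{cs}},\textbf{t})$ and the softmax normalization), so they are invariant under muting a single feature's rating. For the same reason, every other affecter $z \neq x$ of $rec^i$ satisfies $\sigma(z) = \mathcal{P}_{u_{cs}}^{z} = s = s'$ across the two frameworks, which discharges the hypothesis of \emph{weak monotonicity} automatically. Hence muting $x$ perturbs only the single summand $\mathcal{P}_{u_{cs}}^{at^{*}}$ inside $contr_{t^{*}}$, yielding $contr_{t^{*}}' - contr_{t^{*}} = -\,\mathcal{P}_{u_{cs}}^{at^{*}}/|at_i^{t^{*}}|$, and multiplying by the unchanged weight $\pi_{u_{cs}}^{t^{*}}$ gives
\[
v' - v = -\,\pi_{u_{cs}}^{t^{*}}\,\frac{\mathcal{P}_{u_{cs}}^{at^{*}}}{|at_i^{t^{*}}|}.
\]

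Since $\pi_{u_{cs}}^{t^{*}} > 0$ (a strictly positive softmax output) and $|at_i^{t^{*}}| > 0$, the sign of $v' - v$ is exactly opposite to that of $\sigma(x) = \mathcal{P}_{u_{cs}}^{at^{*}}$. I would then split into the three cases of the definition of \emph{weak monotonicity}: if $x$ attacks $rec^i$ then $\mathcal{P}_{u_{cs}}^{at^{*}} < 0$, so $v' > v$; if $x$ supports $rec^i$ then $\mathcal{P}_{u_{cs}}^{at^{*}} > 0$, so $v' < v$; and if $x$ neutralizes $rec^i$ then $\mathcal{P}_{u_{cs}}^{at^{*}} = 0$, so $v' = v$, matching the three required inequalities precisely.

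The main obstacle I anticipate is conceptual rather than computational: reconciling the muting operation with Definition~\ref{def:taf_fata}, under which the polarity of $(at,rec^i)$ is tied to the sign of $\mathcal{P}_{u_{cs}}^{at}$. Taken literally, setting $\sigma(x)=0$ in the primed framework would relocate $(x,rec^i)$ into $\mathcal{R}^{0'}$, whereas the case hypotheses require $x$ to retain its polarity there. I would resolve this by reading \emph{weak monotonicity} in the standard abstract sense, holding the relational structure $(\mathcal{R}^{-},\mathcal{R}^{+},\mathcal{R}^{0})$ fixed while muting only the numerical strength of $x$ to the neutral point; the computation above is insensitive to this modeling choice because it uses only the value and sign of $\mathcal{P}_{u_{cs}}^{at^{*}}$. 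The remaining work is then routine arithmetic on Equations~\ref{equa:contribution_fata} and~\ref{equa:final_fata}.
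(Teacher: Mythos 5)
Your proof is correct and follows essentially the same route as the paper's own: mute a single feature, observe via Equations~\ref{equa:contribution_fata} and~\ref{equa:final_fata} that only one summand of one $contr_t$ changes while the importance weights and all other feature ratings stay fixed, and conclude by the three-way polarity case split. Your version is in fact tighter than the paper's, which merely asserts the resulting inequalities: the explicit difference formula $v'-v=-\,\pi_{u_{cs}}^{t^{*}}\,\mathcal{P}_{u_{cs}}^{at^{*}}/|at_i^{t^{*}}|$ pins down the sign argument, and your observation that literal muting would relocate $(x,rec^i)$ into $\mathcal{R}^{0'}$ under Definition~\ref{def:taf_fata} (resolved by holding the relational structure fixed and muting only the numerical strength) addresses a subtlety the paper's proof passes over silently.
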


\begin{proof}
\emph{Weak monotonicity} is formulated for two TAFs: from $<\mathcal{A}, \mathcal{R}^{-}, \mathcal{R}^{+}, \mathcal{R}^{0}>$ to $<\mathcal{A}^{\prime}, \mathcal{R}^{{-}^{\prime}}, \mathcal{R}^{{+}^{\prime}}, \mathcal{R}^{{0}^{\prime}}>$, after modifying certain arguments (e.g. muting certain features). Case (i): if $at \in \mathcal{R}^-(rec^i)$, then according to Definition~\ref{def:taf_fata}, ${\mathcal{P}_{u_{cs}}^{at}} < 0$, when $at$ is muted then ${\mathcal{P}_{u_{cs}}^{at}}^{\prime} = 0$. According to Equations~\ref{equa:contribution_fata} and ~\ref{equa:final_fata}, ${\hat{r}_{(u,i)}} ^{\prime} > \hat{r}_{(u,i)} $, indicating that ${\sigma(rec^i)}^{\prime} > \sigma(rec^i)$. Case (ii): if $at \in \mathcal{R}^+(rec^i)$, then according to Definition~\ref{def:taf_fata}, ${\mathcal{P}_{u_{cs}}^{at}} > 0$, when $at$ is muted then ${\mathcal{P}_{u_{cs}}^{at}}^{\prime} = 0$. According to Equations~\ref{equa:contribution_fata} and ~\ref{equa:final_fata}, ${\hat{r}_{(u,i)}} ^{\prime} < \hat{r}_{(u,i)} $, indicating that ${\sigma(rec^i)}^{\prime} < \sigma(rec^i)$. Case (iii): if $at \in \mathcal{R}^0(rec^i)$, then according to Definition~\ref{def:taf_fata}, ${\mathcal{P}_{u_{cs}}^{at}} = 0$, when $at$ is muted then ${\mathcal{P}_{u_{cs}}^{at}}^{\prime} = 0$. According to Equations~\ref{equa:contribution_fata} and ~\ref{equa:final_fata}, ${\hat{r}_{(u,i)}} ^{\prime} = \hat{r}_{(u,i)} $, indicating that ${\sigma(rec^i)}^{\prime} = \sigma(rec^i)$.
\end{proof}

Based on Proposition~\ref{proposition:monotonicity_fata}, the following corollary holds.
\begin{corallary}\label{corallary:2}
If the predicted rating of on $at \in at_i$ is increased: ${\mathcal{P}_{u_{cs}}^{at}}^{\prime} > \mathcal{P}_{u_{cs}}^{at}$, then the predicted rating of item $i$ also increases: ${\hat{r}_{(u,i)}} ^{\prime} > \hat{r}_{(u,i)}$. Accordingly, if the predicted rating one $at \in at_i$ is decreased: ${\mathcal{P}_{u_{cs}}^{at}}^{\prime} < \mathcal{P}_{u_{cs}}^{at}$, then the predicted rating of item $i$ also decreases: ${\hat{r}_{(u,i)}} ^{\prime} < \hat{r}_{(u,i)}$.
\end{corallary}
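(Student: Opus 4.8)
The plan is to prove the corollary by a direct, \emph{ceteris paribus} computation on Equations~\ref{equa:contribution_fata} and~\ref{equa:final_fata}, which is a natural strengthening of the muting argument already used for Proposition~\ref{proposition:monotonicity_fata}. Whereas \emph{weak monotonicity} only concerns the special perturbation that drives $\sigma(at) = \mathcal{P}_{u_{cs}}^{at}$ exactly to the neutral point $0$, the corollary asks about an \emph{arbitrary} change of $\mathcal{P}_{u_{cs}}^{at}$; I would therefore replace the ``set to $0$'' step by a general ``set to $(\mathcal{P}_{u_{cs}}^{at})'$'' step and track its signed effect on $\hat{r}_{(u,i)}$.

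First I would observe that each feature $at$ belongs to exactly one feature type, say $t$, so that $at \in at_i^t$ and $\mathcal{P}_{u_{cs}}^{at}$ enters one and only one summand of Equation~\ref{equa:final_fata}, namely the contribution $contr_t$; it does not appear in $contr_{t'}$ for any $t' \neq t$. Consequently, holding every other feature rating and every importance weight fixed, the sole channel through which a change in $\mathcal{P}_{u_{cs}}^{at}$ propagates to $\hat{r}_{(u,i)}$ is $contr_t$. Replacing $\mathcal{P}_{u_{cs}}^{at}$ by $(\mathcal{P}_{u_{cs}}^{at})'$ alters $contr_t$ by $[(\mathcal{P}_{u_{cs}}^{at})' - \mathcal{P}_{u_{cs}}^{at}]/|at_i^t|$ via Equation~\ref{equa:contribution_fata}, and hence, by Equation~\ref{equa:final_fata}, $\hat{r}_{(u,i)}' - \hat{r}_{(u,i)} = \pi_{u_{cs}}^t \cdot [(\mathcal{P}_{u_{cs}}^{at})' - \mathcal{P}_{u_{cs}}^{at}]/|at_i^t|$. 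Since $|at_i^t| \geq 1 > 0$ and the weight $\pi_{u_{cs}}^t$ is a softmax output and thus strictly positive, the sign of $\hat{r}_{(u,i)}' - \hat{r}_{(u,i)}$ coincides with the sign of $(\mathcal{P}_{u_{cs}}^{at})' - \mathcal{P}_{u_{cs}}^{at}$, which yields both implications of the corollary at once.

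The points that require care are modelling subtleties rather than technical difficulties. I must justify that the importance weights $\pi_{u_{cs}}^t$ and the ratings of the remaining features are genuinely held fixed under the perturbation: this holds because, per Step~2 and Step~3, these quantities are functions of $\bm{u_{cs}}$ and the type/feature embeddings alone and carry no functional dependence on $\mathcal{P}_{u_{cs}}^{at}$, so the corollary is a legitimate partial-monotonicity statement. The second point is the strict positivity of $\pi_{u_{cs}}^t$, which follows immediately since $exp(\cdot)$ is always positive, placing each normalised weight in $(0,1)$. I expect this positivity, together with the single-type membership of $at$, to be the crux that upgrades the weak, muting-based monotonicity of Proposition~\ref{proposition:monotonicity_fata} into the full bidirectional, arbitrary-magnitude claim of the corollary; beyond these observations the argument is a one-line computation.
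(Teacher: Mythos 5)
Your proof is correct and follows essentially the same route as the paper: the paper's own proof merely declares that the claim follows ``by inspecting Equations~\ref{equa:contribution_fata} and~\ref{equa:final_fata}'' together with Proposition~\ref{proposition:monotonicity_fata}, and your explicit difference computation $\hat{r}_{(u,i)}' - \hat{r}_{(u,i)} = \pi_{u_{cs}}^t\,[({\mathcal{P}_{u_{cs}}^{at}})' - \mathcal{P}_{u_{cs}}^{at}]/|at_i^t|$ is precisely the inspection the paper leaves implicit. Your further observation that Proposition~\ref{proposition:monotonicity_fata} (muting to $0$) does not by itself cover arbitrary perturbations, and that the claim really rests on strict positivity of the softmax weights $\pi_{u_{cs}}^t$ and the single-summand dependence of $\hat{r}_{(u,i)}$ on $\mathcal{P}_{u_{cs}}^{at}$, supplies exactly the content the paper waves away as trivial.
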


\begin{proof}
    By inspecting Equations~~\ref{equa:contribution_fata} and ~\ref{equa:final_fata}, Proposition~\ref{proposition:monotonicity_fata}, it is trivial to see that Corollary~\ref{corallary:2} holds.
\end{proof}

Propositions~\ref{proposition:balance_fata} and ~\ref{proposition:monotonicity_fata} guarantee that the fourth objective—\textbf{the formulation of a strength function that complies with \emph{weak balance} and \emph{weak monotonicity} defined in Section~\ref{sec7.2.1}}—is accomplished. In an argumentative fashion, the TAF related to CA-FATA elucidates how each feature sways the final rating prediction (recommendation): each feature may support, attack, or have no impact on the item's recommendation, with its polarity determined by the user's rating towards that feature. Therefore, CA-TAFA can be considered decomposable \citep{lipton2018mythos}, as each computation bears explicit meanings, which can be expressed in an intelligent manner \citep{lou2012intelligible}.

\subsection{Explanation applications}\label{sec:cafa_application}

\subsubsection{Toy explanation templates}\label{toy_dsaa}
 



Upon conducting the aforementioned analyses, we introduce three explanation templates: In every scenario, the most influential contextual condition ({computed in Step 1 in Section~\ref{sec:recommender}}) is selected. In the case of ``strong recommendation'', our approach proposes choosing the two most potent arguments (i.e., features) that support the recommendation for the item in question. Within the ``weak recommendation'' scenario, we propose the selection of the most influential argument in support of the item's recommendation, paired with the strongest argument against it. For situations where an item is ``not recommended'', our model suggests identifying the two most compelling arguments (i.e., features) that undermine the item's recommendation \footnote[6]{Please refer to \url{https://github.com/JinfengZh/ca-fata}.}. Each of these templates weaves together contextual information and corresponding arguments that either substantiate or contest the item's recommendation. To encapsulate, CA-FATA emerges as a versatile model capable of elucidating not only the reasons behind recommending certain items but also the rationale for discouraging others. In addition, CA-FATA offers the flexibility to create their own templates tailored to real needs.

\subsubsection{Interactive RSs}\label{interactive_dsaa}

\begin{figure}[t]
	\begin{center}
		
		\subfigure[CA-FATA recommends an item  and explains this recommendation while the user does not like the feature.]{\includegraphics[width=0.50\textwidth]{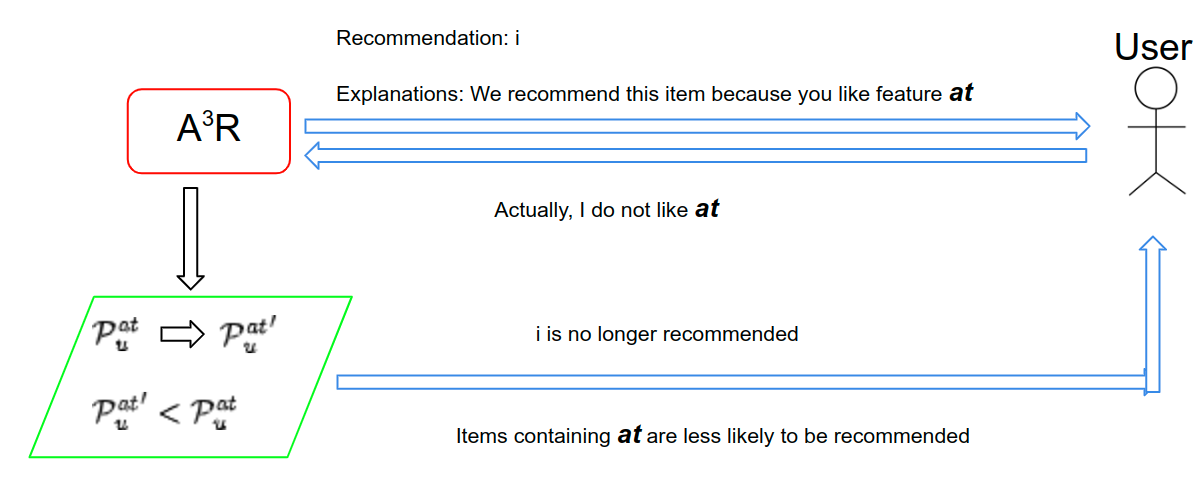}
			\label{computation_dsaa.sub.+}}
		\subfigure[CA-FATA does not recommend an item  and explains this behavior while the user likes the feature.]{\includegraphics[width=0.50\textwidth]{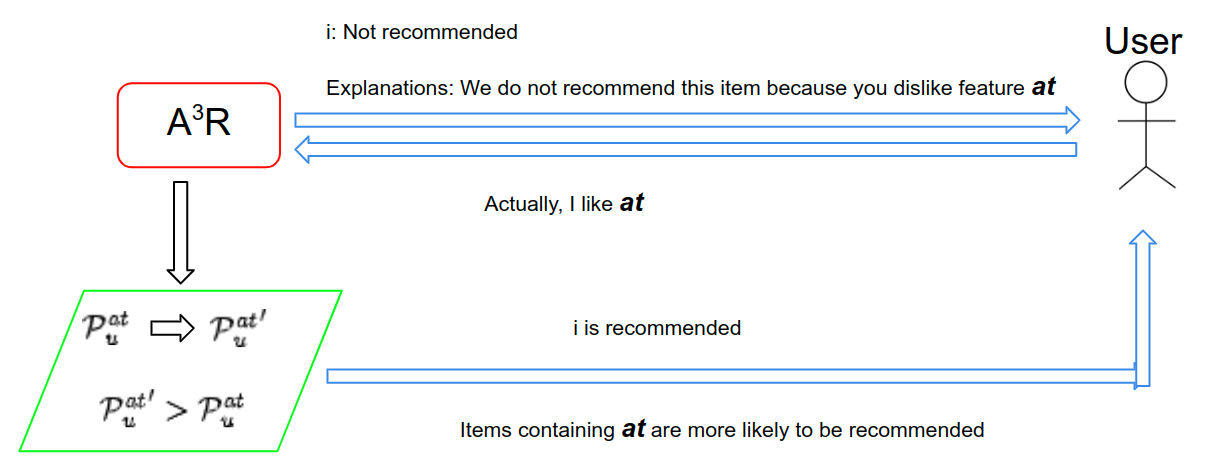}
			\label{computation_dsaa.sub.-}}
		\caption{Interaction process between users and CA-FATA}
		\label{fig:dsaa_interactive}
	\end{center}
\end{figure}

Another potential application of the CA-FATA model lies in its integration with interactive recommender systems (RSs), which take into account immediate user feedback to improve and adapt recommendations on the go \citep{he2016interactive, rago2020argumentation,rago2021argumentative}. This idea allows for a more dynamic and customized recommendation process, as compared to traditional, static RSs \citep{he2016interactive}. Following the optimization of CA-FATA parameters, the calculation of the importance of feature type and users' respective ratings towards each feature can proceed. When users receive a recommendation $i$ from the RS, it comes with the explanations generated by CA-FATA, such as ``We recommend this item because you like feature $at$.'' If users indicate a disinterest in feature $at$, the system can subsequently downgrade the rating of this argument. To illustrate, the system could modify $\mathcal{P}_u^{at}$ to ${\mathcal{P}_{u}^{at}}^{\prime}$, where ${\mathcal{P}_{u}^{at}}^{\prime} < \mathcal{P}_{u}^{at}$. Consequently, Corollary~\ref{corallary:2} ensures that ${\hat{r}{(u,i)}} ^{\prime} < \hat{r}{(u,i)}$, thus effectively preventing further recommendations of item $i$. {This process is illustrated in Figure~\ref{computation_dsaa.sub.+}.} Conversely, in scenarios where users appreciate item $i$ despite it not being recommended (refer to the ``not recommended'' scenario in Section~\ref{toy_dsaa}), the system explains, ``We do not recommend this item because you dislike feature $at$''. If users express that they do indeed favor feature $at$, the system can then increase the rating of this feature. The system could, for example, alter $\mathcal{P}_u^{at}$ to ${\mathcal{P}_{u}^{at}}^{\prime}$, under the condition that ${\mathcal{P}_{u}^{at}}^{\prime} > \mathcal{P}_{u}^{at}$. Consequently, according to Corollary~\ref{corallary:2}, ${\hat{r}{(u,i)}} ^{\prime} > \hat{r}{(u,i)}$. This adjustment ensures that item $i$ is recommended and items possessing feature $at$ are more likely to be favored in future recommendations, {which is illustrated in Figure~\ref{computation_dsaa.sub.-}.} This mechanism demonstrates that CA-FATA allows users to provide feedback when recommendations are unsuitable, thereby ensuring the scrutability of the explanations as articulated by \cite{tintarev2015explaining}.

\subsubsection{Contrastive explanations}\label{contrastive_dsaa}

\begin{algorithm}[t]\SetKwInOut{Input}{Input}\SetKwInOut{Output}{Output}
	\caption{Contrastive explanations} \label{alg:contrastive}

 \DontPrintSemicolon
 \SetAlgoLined

	\Input{CA-FATA: detailed in Section~\ref{sec:recommender}\\$u$: a target user\\$\mathcal{I} \backslash \mathcal{I}_u$: a set of items that $u$ has not interacted before}
	\Output{Explanations for recommending item} 
       $i_{rec} \leftarrow \mathop{\arg\max}\limits_{i \in \mathcal{I} \backslash \mathcal{I}_u} \hat{r}_{(u,i)}$, computed by CA-FATA\\
       $at_{pro} \leftarrow \mathop{\arg\max}\limits_{at \in at_{i_{rec}}}\pi_u^t*\mathcal{P}_u^{at}$\\
       $t_{pro} \leftarrow$ type of  $at_{pro}$\\
       $i_{con} \leftarrow \mathop{\arg\min}\limits_{i \in \mathcal{I}\backslash \{\mathcal{I}_u,i_{rec}\}}\hat{r}_{(u,i)}$\\
       $at_{con} \leftarrow \mathop{\arg\min}\limits_{at \in at^{t_{pro}}_{i_{con}} }\pi_u^t*\mathcal{P}_u^{at}$\\
        $Explanation \leftarrow$ ``We recommend $i_{rec}$ instead of $i_{con}$ because you prefer $at_{pro}$ and $i_{rec}$ is $at_{pro}$ while $i_{con}$ is $at_{pro}$.
\end{algorithm}
\vspace{-0,2cm}

As we detail in Section~\ref{sec:argumentative_explanation}, at the core of \emph{weak balance} and \emph{weak monotonicity} is the counterfactual reasoning to analyze the influence of arguments. Therefore, it would be natural to include contrastive explanations \citep{miller2019explanation}, which provide information not just about the features of the recommended item that led to it being recommended, but also how these features differ from those of other similar items that were not recommended (see the ``not recommended'' scenario in Section~\ref{toy_dsaa}).

Formally, Algorithm~\ref{alg:contrastive} illustrates the process of generating contrastive explanations using CA-FATA. Initially, the algorithm identifies a recommendation, denoted as $i_{rec}$ (line 1). Subsequently, it computes the feature contributing the most to $i_{rec}$'s recommendation (line 2), and identifies this feature type as $t_{pro}$ (line 3). The algorithm then identifies the lowest-rated item by the target user, denoted as $i_{con}$ (line 4). For the comparison to be logical, the algorithm specifies the feature of $i_{con}$ that belongs to $t_{pro}$ and receives the lowest rating from the target user (line 5). The recommendation for $i_{rec}$ can then be contrastively explained by comparing with $i_{con}$: ``We recommend $i_{rec}$ over $i_{con}$ because you have a preference for $at_{pro}$. $i_{rec}$ exhibits $at_{pro}$, whereas $i_{con}$ exhibits $at_{con}$.'' An example of a contrastive explanation for a movie recommendation could be:

\begin{example}\label{example:contrastive}
	We recommend Movie A over Movie B due to your preference for action movies. While Movie A fits this genre, Movie B is a drama.
\end{example}

In conclusion, CA-FATA is a highly adaptable model that provides explanations for both recommended and non-recommended items. Furthermore, it offers users the flexibility to define their own templates according to their specific requirements. We have also demonstrated that CA-FATA can be incorporated into interactive RSs to generate recommendations that align closely with users' preferences. Moreover, the model facilitates the generation of contrastive explanations by leveraging the inherent counterfactual reasoning within CA-FATA.

\subsection{Relations with other models}

{Having presented how our model can explain recommendations, we proceed to compare our model with that of A-I \citep{rago2018argumentation,rago2021argumentative}, which served as an inspiration for our work.} \cite{rago2018argumentation,rago2021argumentative} define users' profiles as ``collaborative filtering'' and ``type importance''. ``Collaborative filtering'' defines how users are influenced by ``similar users''. The ``similar users'' are defined as follows: $w_{u,v} = \frac{\textbf{u} \cdot \textbf{v}}{\|\textbf{u}\| \cdot  \|\textbf{v}\|}$ where $\textbf{u}$ and $\textbf{v}$ are vectors that represent users' preferences toward each genre (in their work, the genre of movies). ``type importance'' is the same as $\pi_u^t$ in our work, which defines how important each feature type is to users. A-I computes users' ratings toward each feature of an item and these ratings are regarded as the strength of arguments that lead to the final ratings of items, which makes recommendations generated by A-I explainable. We point out the traits of our CA-FATA compared with the A-I:




\begin{itemize}
    \item We do not explicitly explore users' ``similar users'', which is defined by users' preferences on genres of movies. We argue that this is too limited because users' preferences for actors and directors should be considered,  which are two other important feature types. 
    
    \item Unlike the A-I framework that directly sets the importance to all users (the importance of feature type to all users is the same), CA-FATA learns the importance for each user supervised by user-item interactions by leveraging the prediction power of Vanilla MF, which is, in fact, another form of collaborative filtering. In Section~\ref{sec:experiments}, we will show that CA-FATA can cluster users according to their preferences (aka, the importance of feature type to users).
    \item CA-FATA maps users' ratings towards items and features on an interaction-tailored AF, while in A-I \citep{rago2018argumentation,rago2020argumentation, rago2021argumentative} the authors map users' ratings towards items and features on a user-tailored AF, the graphical representation of a user-tailored AF could be rather large. Therefore, a sub-graph has to be extracted to explain recommendations. In CA-FATA, AFs are interaction-tailored. Therefore, all the arguments (features) are directly linked to items and can directly influence users' ratings towards items. As a result, there is no need for extracting sub-graphs of AFs to explain recommendations.  
\end{itemize}


\section{Empirical evaluations}\label{sec:experiments}

\begin{table}[t]
\scriptsize
\centering
\caption{Comparison between CA-FATA and baselines on RMSE and MAE, the second best are underlined. FATA is basically a variant of CA-FATA, the difference between CA-FATA and FATA is that FATA does not consider users' contexts and is actually the $A^3R$ \citep{zhong2022a3r} model. The version with ``AVG'' means that the importance of each feature type is set the same for all users.}
\label{tab:results_fata}
\begin{tabular}{ccllll}
\toprule
\multicolumn{2}{c}{\multirow{2}{*}{Model}} & \multicolumn{2}{c}{Yelp} & \multicolumn{2}{c}{Frapp\'e} \\
\cmidrule(l){3-6} 
\multicolumn{2}{c}{}  & RMSE & MAE & RMSE  & MAE \\ \toprule
\multirow{2}{*}{Context-free}  & MF  & 1.1809  & 0.9446 & 0.8761 & 0.6470 \\

 &NeuMF &1.1710&0.8815&0.6841 & 0.5207
 \\ \toprule
\multirow{5}{*}{Context-aware} & FM  & 1.1703 & 0.9412& 0.7067 & 0.5796\\
& CAMF-C & 1.1693  & 0.9241 & 0.7283 & 0.5727 \\
& LCM & 1.1687  & 0.9294 & 0.6952& 0.5396 \\
 & ECAM-NeuMF  & 1.1098  & \underline{0.8636} & 0.5599 &   0.4273\\
 \toprule
Argumentation-based  & A-I & 1.3978  & 1.1205& 1.1711 & 0.9848 \\\toprule
\multirow{4}{*}{Our propositions} & FATA & 1.1434 & 0.9059 & 0.6950 & 0.5439 \\
&AVG-FATA &1.1611&0.9314&0.6970&0.5461 \\
& \textbf{CA-FATA}  &\textbf{1.1033}&\textbf{0.8519} & \textbf{0.5154} & \textbf{0.3910} \\
&AVG-CA-FATA&\underline{1.1035}&0.8637&\underline{0.5254}&\underline{0.4025}\\

 \bottomrule
\end{tabular}

\end{table}

In the last section, we have shown how to explain recommendations generated by CA-FATA and the explanations applications. In this section, we conduct experiments on two real-world datasets to address the following research questions: (1) Can CA-FATA achieve competitive performance compared to baseline methods? What are the advantages of CA-FATA compared to baseline methods? (2) How does context influence the performance of CA-FATA? (3) How does the importance of feature type affect the performance of CA-FATA?
\subsection{Experiments setting}

We have conducted experiments on the following real-world datasets:

\textbf{Frapp\'e:} this dataset is collected by \cite{baltrunas2015frappe}. This dataset originated from Frapp\'e, a context-aware app recommender. There are 96 303 logs of usage from 957 users under different contextual situations, 4 082 apps are included in the dataset. Following \cite{unger2020context}, we apply log transformation to the number of interactions. As a result, the number of interactions is scaled to $0-4.46$. 

\textbf{Yelp\footnote[7]{\url{https://www.yelp.com/dataset}}:} this dataset contains users' reviews on bars and restaurants in metropolitan areas in the USA and Canada. Consistent with previous studies by \cite{zhou2020s3}, we use the records between January 1, 2019 to December 31, 2019, which contains 904 648 observations.

We compare the following baselines: (1) \textbf{Vanilla MF} \citep{koren2009matrix} (2) \textbf{CAMF-C} \citep{baltrunas2011matrix}; (3) \textbf{FM} \citep{rendle2010factorization}. (4) \textbf{NeuMF} \citep{he2017neural1} (5) \textbf{ECAM-NeuMF} \citep{unger2020context}: an extension of \textbf{NeuMF} that integrates contextual information. Note that \cite{unger2020context} do not release the implementation detail, for the NeuMF part, we empirically set the MLP factor size as 8, the sizes of the hidden layer as $(16,8,4)$, the GMF (Generalized Matrix Factorization) factor size as 16. This setting also applies to pure NeuMF. (6) \textbf{A-I} \citep{rago2018argumentation, rago2021argumentative}: an argumentation-based framework that computes users' ratings towards features, which are then aggregated to obtain the ratings towards items. Following \cite{rago2018argumentation}\footnote[8]{For detail please refer to \url{https://github.com/CLArg-group/KR2020-Aspect-Item-Recommender-System}.}, we set the ``collaborative factor'' as 0.8, 20 most similar users are selected, and all the feature importance is set as 0.1. 

\subsection{Experiment results}

Table~\ref{tab:results_fata} presents the results of the rating prediction experiments. We observe that CA-FATA performs better than all baselines on both the Yelp dataset and the Frapp\'e dataset, indicating its superiority in handling complex contextual information. The following are some specific observations: (i) CA-FATA performs well on the two datasets, outperforming all baselines, demonstrating its ability to model users' preferences under different contexts. Another advantage of CA-FATA is the ability to provide argumentative explanations, which is not possible for these baselines. (ii) Compared to A-I, on Yelp, CA-FATA achieves a significant reduction in RMSE and MAE. (iii) A horizontal comparison of Frappé and Yelp datasets shows that CA-FATA performs better on Frappé than on Yelp. We attribute this difference to the sparsity of the dataset, as Yelp is still highly sparse even after applying the 10-core setting, with a sparsity of $99.84\%$, while Frappé has a sparsity of $94.47\%$. To summarize, the advantages of CA-FATA are as follows: (i) it achieves competitive performance compared to both context-free and context-aware baselines. These baselines use factorization-based methods such as MF, FM, and CAMF-C, and some combine neural networks like NeuMF and ECAM-NeuMF, which makes them difficult to interpret. {On the other hand, CA-FATA provides explicit meanings for each computation and generates argumentative explanations (see Section~\ref{sec:argumentative_explanation} for some examples)}; (ii) compared to the argumentation-based method A-I, CA-FATA significantly improves prediction accuracy and generates context-aware explanations.

\subsection{Abalation study}\label{sec7.3.3}

\begin{figure*}[t]
\centering  
\subfigure[\footnotesize{Average importance of each contextual factor of users from cluster 0}]{
\label{Fig.sub.0_fata}
\includegraphics[width=0.20\textwidth]{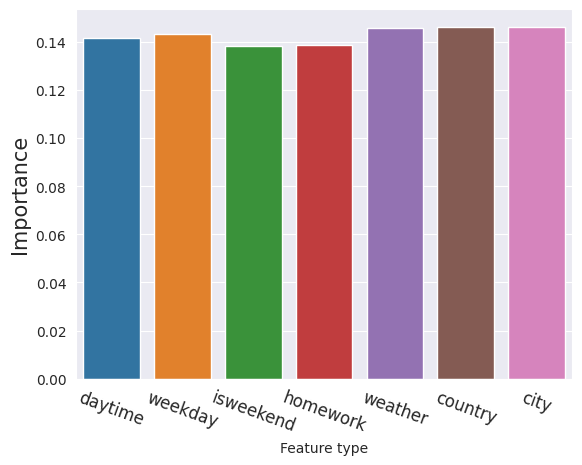}}
\subfigure[\footnotesize{Average importance of each contextual factor of users from cluster 1}]{
\label{Fig.sub.1_fata}
\includegraphics[width=0.20\textwidth]{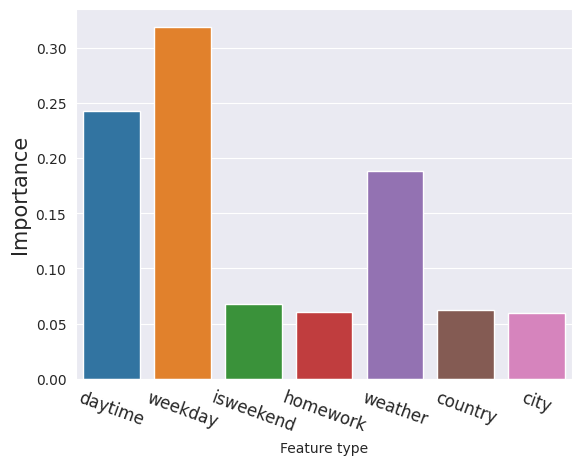}}
\subfigure[\footnotesize{Average importance of each contextual factor of users from cluster 2}]{
\label{Fig.sub.2_fata}
\includegraphics[width=0.20\textwidth]{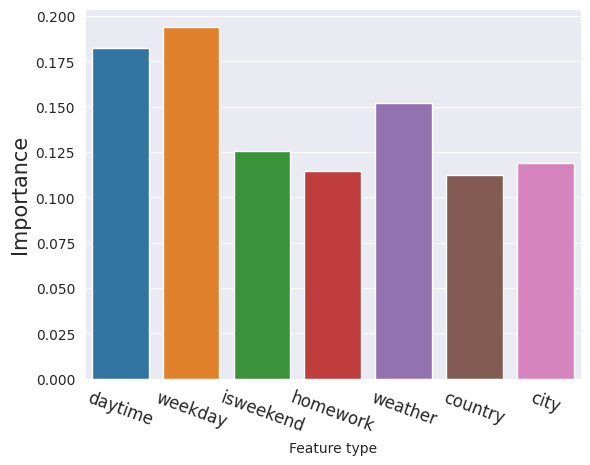}}
\subfigure[\footnotesize{Average importance of each contextual factor of users from cluster 3}]{
\label{Fig.sub.3_fata}
\includegraphics[width=0.20\textwidth]{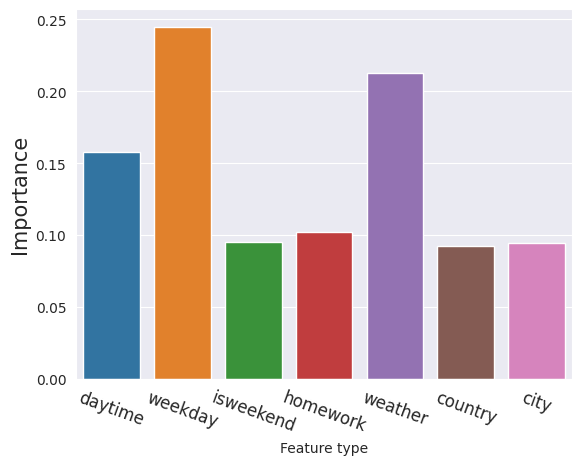}}
\caption[A case study on Frapp\'e dataset]{A case study on Frapp\'e dataset that shows the clustering of users according to the contextual factor importance learned by $CA-FATA$. The histogram shows the average importance of each contextual factor in the cluster.}
\label{Fig.main_fata}
\end{figure*}


In order to investigate the impact of contextual factors on the performance of CA-FATA, we propose an alternative approach called FATA, which neglects user contexts, identical to the $A^3R$ model proposed by \cite{zhong2022a3r}. Results presented in Table~\ref{tab:results_fata} (\textbf{refer to rows 10 and 12}) demonstrate that CA-FATA outperforms FATA, indicating that incorporating user contexts enables more nuanced modeling of user preferences and improves prediction accuracy. This conclusion is reinforced by the superior performance of CAMF-C over MF and ECAM-NeuMF over NeuMF. To investigate the influence of feature type importance on our proposed model's performance, we introduce AVG-CA-FATA and AVG-FATA for CA-FATA and FATA, respectively (\textbf{refer to rows 11 and 13 in Table~\ref{tab:results_fata}}). In these models, the importance of each feature type is uniformly set for all users. For instance, in Frappé, where there are five feature types, the importance is set to 0.2 for all users, while in Yelp, where there are three feature types, the importance is set to 0.33. Results demonstrate that AVG-CA-FATA performs worse than CA-FATA, as does AVG-FATA when compared to FATA. Furthermore, comparisons between FATA, AVG-FATA, CA-FATA, and AVG-CA-FATA confirm the advantages of incorporating user contexts and modeling feature type importance across users.

\subsection{Case study}\label{sec:case_cafata}

To further visualize the impact of context, we represent each user by their contextual factor importance. We use the Frappé dataset as an example, where a vector of seven dimensions represents each user. We therefore first apply K-means clustering for its simplicity and effectiveness \citep{velmurugan2010computational}, and find that four clusters best fit the dataset. We then use UMAP \citep{mcinnes2018umap} to visualize the clustering results. Note that other dimension reduction techniques (such as PCA and t-SNE \citep{van2008visualizing}, for the results of these methods please refer to \url{https://github.com/JinfengZh/ca-fata}) could also be used, {but we choose UMAP because it can preserve the underlying information and general structure of the data.} The average importance of each contextual factor for the users in the four clusters is shown in Figures~\ref{Fig.sub.0_fata}, ~\ref{Fig.sub.1_fata}, ~\ref{Fig.sub.2_fata}, ~\ref{Fig.sub.3_fata}, revealing that users in different clusters pay different levels of attention to contextual factors. Similar visualization applies to the Yelp data, due to limited space, we have omitted the visualization result. Please refer to \url{https://github.com/JinfengZh/ca-fata/blob/master/graph_yelp.ipynb}.

\section{Conclusions}\label{sec:conclusions}

In this paper, we propose a factorization-based model that can provide model-intrinsic explanations. To this end, we propose Context-Aware Feature-Attribution Through Argumentation (CA-FATA) that treats features as arguments that can either support, attack, or neutralize a prediction using argumentation procedures. This approach provides explicit meanings to each step and allows for easy incorporation of user context to generate context-aware recommendations and explanations. The argumentation scaffolding in CA-FATA is designed to satisfy two important properties: \emph{weak balance} and \emph{weak monotonicity}, {which highlights how features influence a prediction}. These properties help identify important features and study how they influence the prediction task. Additionally, our framework seamlessly incorporates side information, such as user contexts, leading to more accurate predictions. We anticipate at least three practical applications for our framework: creating explanation templates, providing interactive explanations, and generating contrastive explanations. Through testing on real-world datasets, we have found that our framework, along with its variants, not only surpasses existing argumentation-based methods but also competes effectively with current context-free and context-aware methods. 

CA-FATA follows the spirit of argumentation: features are regarded as arguments and users' ratings towards these arguments determine the polarity and the strength of them. In fully connected layers, each neuron performs a linear transformation on the input vector by utilizing a weights matrix. Regarding neurons as arguments may help to understand how neural networks achieve a prediction: the operations are mapped into AFs. There have been some primary works that integrate the spirit of argumentation into neural networks \citep{proietti2023roadmap, albini2020deep}. It would be interesting to explore how such integration benefits RSs. We also plan to conduct user studies to evaluate the three applications of the argumentative explanations presented in this paper.\newpage

\bibliographystyle{kr}
\bibliography{kr-sample}

\end{document}